\newcommand{\defeq}{\vcentcolon\protect\nolinebreak\mkern-1.2mu=}
\DeclarePairedDelimiterX\set[1]\lbrace\rbrace{\def\given{\;\delimsize\vert\;}#1}
\newcommand\mathbfsc[1]{\text{\normalfont\scshape#1}}
\newcommand\pa[1]{\mathbfsc{pa}(#1)}
\newcommand\de[1]{\mathbfsc{de}(#1)}
\definecolor{blue}{rgb}{0,0,1}
\definecolor{orange}{rgb}{1,0.5,0.1}
\definecolor{brightblue}{rgb}{0.92,0.92,1}
\definecolor{brightgrey}{rgb}{0.96,0.96,1}
\definecolor{green}{rgb}{0.2,1.0,0.2}
\tikzstyle{var}=[circle,draw=black,fill=white,thin,minimum size=18pt,inner sep=0pt]
\tikzstyle{varh}=[circle,draw=gray,fill=white,thin,minimum size=18pt,inner sep=0pt,dashed]
\tikzstyle{vartarget}=[circle,draw=black,fill=lightgray,thin,minimum size=18pt,inner sep=0pt]
\tikzstyle{varintervention}=[rectangle,draw=black,fill=white,thin,minimum size=18pt,inner sep=0pt]
\tikzstyle{arr}=[->,>=stealth',draw=black,thick]
\tikzstyle{arrh}=[->,>=stealth',draw=gray,thick,dashed]
\tikzstyle{biarr}=[<->,>=stealth',draw=black,fill=black,thick]
\tikzstyle{biarrh}=[<->,>=stealth',draw=gray,fill=gray,thick]
\lstdefinestyle{ASP}{
  belowcaptionskip=1\baselineskip,
  breaklines=true,
  frame=L,
  xleftmargin=\parindent,
  language=psl,
  showstringspaces=false,
  basicstyle=\script\ttfamily,
  keywordstyle=\bfseries\color{green!40!black},
  commentstyle=\itshape\color{purple!40!black},
  identifierstyle=\color{blue},
  stringstyle=\color{orange},
}
\newcommand\indep{{\,\perp\mkern-12mu\perp\,}}
\newcommand\notindep{{\,\not\mkern-1mu\perp\mkern-12mu\perp\,}}
\newcommand\dep{{\,\not\mkern-1mu\perp\mkern-12mu\perp\,}}
\newcommand{\dsep}{\perp}
\newcommand\idcausal{\dashrightarrow}
\newcommand\idacausal{\text{${}\not\!\dashrightarrow{}$}}
\newcommand\B[1]{\bm{#1}}
\newcommand\C[1]{\mathcal{#1}}
\newcommand\BC[1]{\bm{\mathcal{#1}}}
\newcommand\given{\,|\,}
\newcommand\causes{\idcausal}
\newcommand\notcauses{\idacausal}
\newcommand\eref[1]{(\ref{#1})}
\newtheorem{proposition}{Proposition}
\newtheorem{assumption}{Assumption}
\newtheorem{task}{Task}
\newtheorem{example}{Example}
\newcommand{\setassumptiontag}[1]{
  \let\oldtheassumption\theassumption
  \renewcommand{\theassumption}{#1}
  \g@addto@macro\endassumption{
  \addtocounter{assumption}{-1}
  \global\let\theassumption\oldtheassumption}
  }
\renewenvironment{proof}[1][\proofname]{\par
  \pushQED{\qed}%
  \normalfont \topsep6\p@\@plus6\p@\relax
  \trivlist
  \item[\hskip\labelsep
        \bfseries
    #1\@addpunct{.}]\ignorespaces
}{%
  \popQED\endtrivlist\@endpefalse
}
\newcommand\Prb{\mathbb{P}}
\newcommand\Exp{\mathbb{E}}
\definecolor{lgray}{rgb}{0.9,0.9,0.9}
\title{Domain Adaptation by Using Causal Inference to Predict Invariant Conditional Distributions}
\author{
  Sara Magliacane\\
  IBM Research\thanks{Most of the work was performed while at the University of Amsterdam.}\\
  \texttt{sara.magliacane@gmail.com}
  \And
  Thijs van Ommen\\
  University of Amsterdam\\
  \texttt{thijsvanommen@gmail.com}
  \And
  Tom Claassen\\
  Radboud University Nijmegen\\
  \texttt{tomc@cs.ru.nl} 
  \And
  Stephan Bongers\\
  University of Amsterdam\\
  \texttt{srbongers@gmail.com}
  \And
  Philip Versteeg\\
  University of Amsterdam\\
  \texttt{p.j.j.p.versteeg@uva.nl}
  \And
  Joris M.\ Mooij\\
  University of Amsterdam\\
  \texttt{j.m.mooij@uva.nl}   
}
\begin{document}

\maketitle

\begin{abstract}
An important goal common to domain adaptation and causal inference is to make accurate predictions when the distributions for the source (or training) domain(s) and target (or test) domain(s) differ. In many cases, these different distributions can be modeled as different contexts of a single underlying system, in which each distribution corresponds to a different perturbation of the system, or in causal terms, an intervention.
We focus on a class of such \emph{causal} domain adaptation problems, 
where data for one or more source domains are given, and the task is to predict the distribution of a certain target variable from measurements of other variables in one or more target domains.
We propose an approach for solving these problems that exploits causal inference and does not rely on prior knowledge of the causal graph, the type of interventions or the intervention targets.
We demonstrate our approach by evaluating a possible implementation on simulated and real world data.
\end{abstract}

\section{Introduction}\label{sec:intro}

Predicting unknown values based on observed data is a problem central to many sciences, and well studied in statistics and machine learning. This problem becomes significantly harder if the training and test data do not have the same distribution, for example because they come from different domains. Such a distribution shift can happen whenever the circumstances under which the training data were gathered are different from those for which the predictions are to be made. A rich literature exists on this problem of \emph{domain adaptation}, a particular task in the field of \emph{transfer learning}; see e.g.~\citet{datasetshiftML2009,Pan2010survey} for overviews.

When the domain changes, so may the relations between the different variables under consideration. While for some sets of variables $\B{A}$, a function $f:\BC{A}\to\C{Y}$ learned in one domain may continue to offer good predictions for $Y\in\C{Y}$ in a different domain, this may not be true of other sets $\B{A}'$ of variables.
\emph{Causal graphs}  \citep[e.g.,][]{Pearl2009,Spirtes2000} allow us to reason about this in a principled way when the domains correspond to different external \emph{interventions} on the system, or more generally, to different contexts in which a system has been measured. Knowledge of the causal graph that describes the data generating mechanism, and of which parts of the model are invariant across the different domains, allows one to transfer knowledge from one domain to the other in order to address the problem of domain adaptation \citep{Spirtes2000,Storkey2009,Schoelkopf_et_al_ICML_12,BareinboimPearl_2016}.

Over the last years, various methods have been proposed to exploit the causal structure of the data generating process in order to address certain domain adaptation problems,
each relying on different assumptions. For example, \citet{BareinboimPearl_2016} provide
theory for identifiability under transfer (``transportability'') assuming that the causal graph is
known, that interventions are perfect, and that the intervention targets are known.
\citet{Hyttinen++_2015} also assume perfect interventions with known targets but do not rely on complete knowledge of the causal graph, instead inferring the relevant aspects of it from the data. \citet{Rojas-Carulla++_2018} make the assumption that if the conditional distribution of the target given some subset of covariates is invariant across different source domains, then this conditional distribution must also be the same in the target domain. The methods proposed in \citep{Schoelkopf_et_al_ICML_12,ZhangSMW2013,ZhangGS2015,Gong++2016} all address challenging settings in which conditional independences that follow from the usual Markov and faithfulness assumptions alone do not suffice to solve the problem, but additional assumptions on the data generating process have to be made. 

In this work, we will make no such additional assumptions, and address the setting in which both the causal graph \emph{and} the intervention types and targets may be (partially) unknown. 
Our contributions are the following.
We consider a set of relatively weak assumptions that make the problem well-posed. 
We propose an approach to solve this class of causal domain adaptation problems that can deal
with the presence of latent confounders. The main idea is to select
the subset of features $\B{A}$ that leads to the best predictions of $Y$ in the source domains,
while satisfying \emph{invariance} (i.e., $\Prb(Y \given \B{A})$ is the same in the source and target domains).
To test whether the invariance condition is satisfied, we apply the recently proposed Joint Causal Inference
(JCI) framework \citep{Mooij++_1611.10351v3} to
exploit the information provided by multiple domains
corresponding to different interventions.
The basic idea is as follows.
First, a standard feature selection method is
applied to source domains data to find sets of features that are predictive of a target variable, trading off bias and 
variance, but unaware of changes in the distribution across domains. A causal inference method then
draws conclusions from all given data about the possible causal graphs, avoiding sets of features for which
the predictions would not transfer to the target domains.
We propose a proof-of-concept implementation of our approach building on a causal discovery algorithm by \citet{antti}.
We evaluate the method on synthetic data and a real-world example.





\section{Theory}

\begin{figure}[t]\centering
  \begin{subfigure}[b]{0.17\textwidth}%
    \centering\captionsetup{width=.85\linewidth}%
    \begin{tikzpicture}
      \node[var] (I1) at (-1,2) {$C_1$};
      \node[var] (A) at (0,1) {$X_1$};
      \node[var] (Y) at (0,0) {$X_2$};
      \node[var] (Z) at (0,-1) {$X_3$};
      \draw[arr] (I1) edge (A);
      \draw[arr,bend right] (I1) edge (Z);
      \draw[arr] (A) edge (Y);
      \draw[arr] (Y) edge (Z);
      \draw[dashed] (-1.5,1.5) -- (0.5,1.5);
    \end{tikzpicture}
    \caption{Causal graph}
  \end{subfigure}\quad%
  \begin{subfigure}[b]{0.385\textwidth}%
    \centering\captionsetup{width=\linewidth}%
    \includegraphics[width=0.75\textwidth]{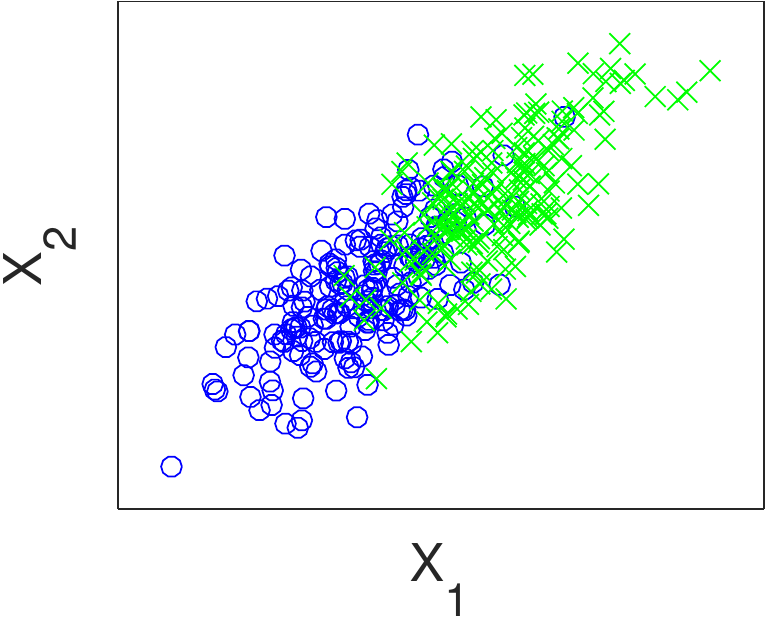}%
    \caption{No distribution shift for $\set{X_1}$: $\Prb(Y \given X_1, C_1=0)=\Prb(Y \given X_1, C_1=1)$}
  \end{subfigure}\quad%
  \begin{subfigure}[b]{0.385\textwidth}%
    \centering\captionsetup{width=\linewidth}%
    \includegraphics[width=0.75\textwidth]{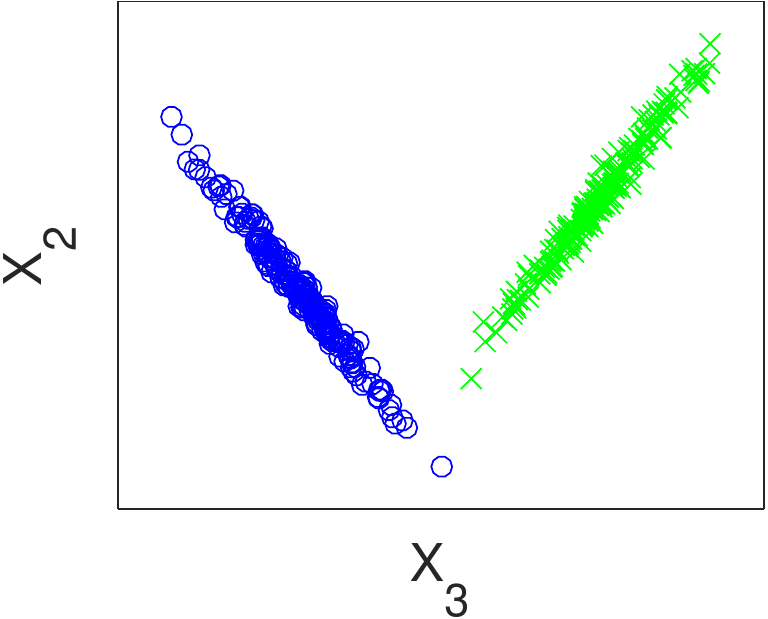}%
    \caption{Strong distribution shift for $\set{X_3}$: $\Prb(Y \given X_3, C_1=0)\ne\Prb(Y \given X_3, C_1=1)$}
  \end{subfigure}%
  \caption{In this scenario, an intervention $C_1$ leads to a shift of distribution between source domain and target domain (see also Example~\ref{ex:FSfailure}). Green crosses show source domain data ($C_1=0$), blue circles show target domain data ($C_1=1$). A standard feature selection method that does not take into account the causal structure, but would use $X_3$ to predict $Y \defeq X_2$ (because $X_3$ is a good predictor of $Y$ in the source domain), would obtain extremely biased predictions in the target domain. Using $X_1$ instead yields less accurate predictions in the source domain, but much more accurate ones in the target domain.
\label{fig:FSfailure}}
\end{figure}
Before giving a precise definition of the class of domain adaptation problems that
we consider in this work, we begin with a motivating example. 
\begin{example}\label{ex:FSfailure}
We are given three variables $X_1,X_2,X_3$ describing different aspects of a system (for example, certain blood cell phenotypes in mice).
We have observational measurements of these three variables (the source domain, designated with $C_1=0$), and in addition, measurements of $X_1$ and $X_3$ under an intervention (the target domain, designated with $C_1=1$), e.g., in which the mice have been exposed to a certain drug. The domain adaptation task is to predict the values of $Y \defeq X_2$ in the interventional target domain (i.e., when $C_1=1$).
Let us assume for this example that the causal graph in Figure~\ref{fig:FSfailure}a applies, i.e., we assume that $X_2$ is affected by $X_1$ and affects $X_3$, while $C_1$ affects both $X_1$ and $X_3$ (i.e., the intervention targets the variables $X_1$ and $X_3$).
This causal graph implies $\Prb(Y \given X_1,C_1=0) = \Prb(Y \given X_1,C_1=1)$.
Suppose further that the relation between $X_1$ and $X_2$ is about equally strong as the relation between $X_2$ and $X_3$, but considerably more noisy.
  Then a feature selection method using only available source domain data, and aiming to select the best subset of features to use for prediction of $Y$ 
  will prefer both $\{X_3\}$ and $\set{X_1,X_3}$ over $\{X_1\}$ (because predicting $Y$ from $X_1$ leads to larger variance than predicting $Y$ from $X_3$, and to a larger bias than predicting $Y$ from both $X_1$ and $X_3$).
  However, under the intervention ($C_1 = 1$), $\Prb(Y \given X_3)$ and $\Prb(Y \given X_1,X_3)$ both change,\footnote{More precisely, we should say that $\Prb(Y \given X_3, C_1=1)$ may differ from $\Prb(Y \given X_3, C_1=0)$, and similarly when conditioning on $\{X_1,X_3\}$.} so that using those features to predict $Y$ in the target domain could lead to extreme bias, as illustrated in Figure~\ref{fig:FSfailure}c. Because the conditional distribution of $Y$ given $X_1$ is invariant across domains, as illustrated in Figure~\ref{fig:FSfailure}b, predictions of $Y$ based only on $X_1$ can be safely transferred to the target domain.
\end{example}
This example provides an instance of a domain adaptation problem where feature selection methods that do not take into account the causal structure would pick a set of features that does not generalize to the target domain, and may lead to arbitrarily bad predictions (even asymptotically, as the number of data points tends to infinity). On the other hand, correctly taking into account the causal structure and the possible distribution shift from source to target domain allows to upper bound the prediction error in the target domain, as we will see in Section~\ref{sec:causal_feature_selection}.

\subsection{Problem Setting}
We now formalize the domain adaptation problems that we address in this paper. 
We will make use of the terminology of the recently proposed 
Joint Causal Inference (JCI) framework \citep{Mooij++_1611.10351v3}. 

Let us consider a system of interest described by a set of \emph{system
variables} $\{X_j\}_{j\in\C{J}}$. In addition, we model the domain in which the
system has been measured by \emph{context variables} $\{C_i\}_{i\in\C{I}}$ (we will
use ``context'' as a synonym for ``domain'').
We will denote the tuple of all system and context variables as 
$\B{V} = ((X_j)_{j\in\C{J}},(C_i)_{i\in\C{I}})$.
System and context variables can be discrete or continuous. As a concrete
example, the system of interest could be a mouse. The system variables could be
blood cell phenotypes such as the concentration of red blood cells, the
concentration of white blood cells, and the mean red blood cell volume. The
context variables could indicate for example whether a certain gene has been
knocked out, the dosage of a certain drug administered to the mice, the age
and gender of the mice, or the lab in which the measurements were done. The
important underlying assumption is that context variables are \emph{exogenous}
to the system, whereas system variables are \emph{endogenous}. The
interventions are not limited to the perfect (``surgical'') interventions
modeled by the do-operator of \citet{Pearl2009}, but can also be other types of
interventions such as mechanism changes \citep{TianPearl2001}, soft interventions 
\citep{Markowetz++2005}, fat-hand interventions \citep{EatonMurphy07}, 
activity interventions \citep{MooijHeskes_UAI_13}, and stochastic versions of all these. 
Knowledge of the intervention \emph{targets} is not necessary (but is certainly helpful). 
For example, administering
a drug to the mice may have a direct causal effect on an unknown subset of the system variables,
but we can simply model it as a binary exogenous variable (indicating whether or not the drug
was administered) or a continuous exogenous variable (describing
the dosage of the administered drug) without specifying in advance on which variables it has a direct effect.
We can now formally state the domain adaptation task that we address in this work:
\begin{task}[Domain Adaptation Task]\label{task:causaldomainadaptation}
We are given data for a single or for multiple source domains, in each of which $C_1 = 0$, and for a single or for multiple target domains,
in each of which $C_1 = 1$. Assume the source domains data is complete (i.e., no missing values), and the target
domains data is complete with the exception of all values of a certain target variable $Y = X_j$.
The task is to predict these missing values of the target variable $Y$ given the available source and target domains data. 
\end{task}
An example is provided in Figure~\ref{fig:task}. In the next subsection, we will formalize our assumptions
to turn this task into a well-posed problem.

\subsection{Assumptions}

Our first main assumption is that the data generating process (on both system
and context variables) can be represented as a Structural Causal Model (SCM)
(see e.g., \citep{Pearl2009}):
\begin{equation}\label{eq:SCM_JCI}
  \C{M}:
  \begin{cases}
    C_i &= g_i(\B{E}_{\pa{i} \cap \C{K}}), \qquad i \in \C{I} \\
    X_j &= f_j( \B{X}_{\pa{j} \cap \C{J}}, \B{C}_{\pa{j} \cap \C{I}}, \B{E}_{\pa{j} \cap \C{K}}), \qquad j \in \C{J}\\
    p(\B{E}) & = \prod_{k\in\C{K}} p(E_k).
  \end{cases}
\end{equation}
Here, we introduced exogenous latent independent ``noise'' variables $(E_k)_{k\in\C{K}}$ that
model latent causes of the context and system variables. The parents of each variable are 
denoted by $\pa{\cdot}$. Each context and system variable is related to its parent variables 
by a structural equation. In addition, we assume a factorizing probability distribution on the
exogenous variables.
There could be cyclic dependencies, for example due to feedback loops, but for simplicity of exposition
we will discuss only the acyclic case here, noting that the extension to the cyclic
case is straightforward given recent theoretical advances on cyclic SCMs \citep{Bongers++_1611.06221v2}.
This SCM provides a causal model for the distributions of the various domains, and in particular,
it induces a joint distribution $\Prb(\B{V})$ on the context and system variables.
Note that we will assume that the data generating process can be modeled by some model of this form,
but we do not rely on knowing the precise model. 

The SCM $\C{M}$ can be represented graphically by its causal graph $\C{G}(\C{M})$, a graph with
nodes $\C{I} \cup \C{J}$ (i.e., the labels of both system and context variables), directed edges 
$l_1 \to l_2$ for $l_1,l_2 \in \C{I} \cup \C{J}$ iff $l_1 \in \pa{l_2}$, and bidirected edges 
$l_1 \leftrightarrow l_2$ for $l_1,l_2 \in \C{I} \cup \C{J}$ iff there exists a $k \in \pa{l_1} \cap \pa{l_2} \cap \C{K}$. In the acyclic
case, this causal graph is an Acyclic Directed Mixed Graph (ADMG), and $\C{M}$ is also known as a 
Semi-Markov Causal Model (see e.g., \citep{Pearl2009}). The directed edges represent direct causal
relationships, and the bidirected edges may represent hidden confounders (both relative to the set of
variables in the ADMG).
The (causal) \emph{Markov assumption} holds \citep{Richardson2003}, i.e., any d-separation $\B{A} \dsep \B{B} \given \B{S}\ [\C{G}(\C{M})]$ between sets of random variables $\B{A},\B{B},\B{S} \subseteq \B{V}$ in the ADMG $\C{G}(\C{M})$ implies a conditional independence $\B{A} \indep \B{B} \given \B{S}\ [\Prb(\B{V})]$ in the distribution $\Prb(\B{V})$ induced by the SCM $\C{M}$.
A standard assumption in causal discovery is that the joint distribution $\Prb(\B{V})$ is \emph{faithful} with respect to the ADMG $\C{G}(\C{M})$, i.e., that there are no other conditional independences in the joint distribution than those implied by d-separation.

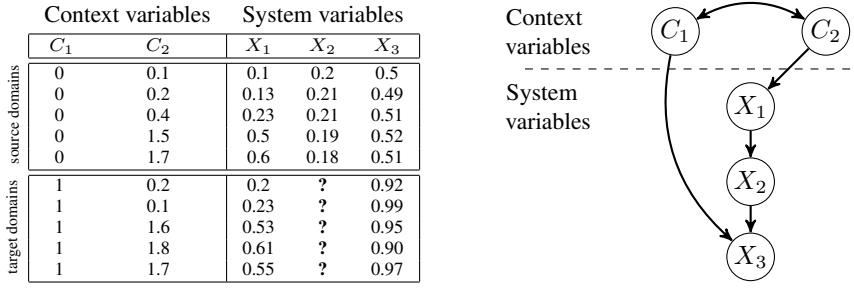
\begin{figure}

\begin{tikzpicture}
\begin{scope}[xshift=0cm]
  \node at (0,-0.5) {\scriptsize\begin{tabular}[t]{|cc|ccc|}
    \multicolumn{2}{c}{\small Context variables} & \multicolumn{3}{c}{\small System variables} \\[2pt]
    \hline
    \rule{0pt}{2ex} $C_1$ & $C_2$ & $X_1$ & $X_2$ & $X_3$ \\
    \hline
    \hline
    0 & 0.1 & 0.1  & 0.2 & 0.5\\
    0 & 0.2 & 0.13 & 0.21 & 0.49\\
    0 & 0.4 & 0.23 & 0.21 & 0.51\\
    0 & 1.5 & 0.5 & 0.19 & 0.52\\
    0 & 1.7 & 0.6 & 0.18 & 0.51\\
    \hline
    \hline
    1 & 0.2 & 0.2 &  \textbf{?} & 0.92\\
    1 & 0.1 & 0.23 & \textbf{?} & 0.99\\
    1 & 1.6 & 0.53 & \textbf{?} & 0.95\\
    1 & 1.8 & 0.61 & \textbf{?} & 0.90\\
    1 & 1.7 & 0.55 & \textbf{?} & 0.97\\
    \hline
  \end{tabular}};
  \node[rotate=90] at (-2.8,-0.1) {\tiny source domains};
  \node[rotate=90] at (-2.8,-1.6) {\tiny target domains};
\end{scope}
\begin{scope}[xshift=5cm,yshift=0.5cm]
   
\end{scope}
    \begin{scope}[xshift=7cm,yshift=-1cm]
      \node[text width=1.5cm] at (-2.5,2) {\small Context variables};
      \node[text width=1.5cm] at (-2.5,1) {\small System variables};
      \node[var] (I1) at (-1,2) {$C_1$};
      \draw[dashed] (-3,1.5) -- (1.5,1.5);
      \node[var] (I2) at (1,2) {$C_2$};
      \node[var] (A) at (0,1) {$X_1$};
      \node[var] (Y) at (0,0) {$X_2$};
      \node[var] (Z) at (0,-1) {$X_3$};
      \draw[arr,bend right] (I1) edge (Z);
      \draw[arr] (I2) edge (A);
      \draw[arr] (A) edge (Y);
      \draw[arr] (Y) edge (Z);
      \draw[biarr,bend left] (I1) edge (I2);
    \end{scope}
\end{tikzpicture}
\caption{Example of a causal domain adaptation problem. The causal graph is depicted on the right, the corresponding data on the left. The task is to predict the missing values of $Y = X_2$ in the target domains $(C_1=1)$, based on the observed data from the source domains and the target domains, without knowledge of the causal graph. See also Example~\ref{ex:identifiable_but_fs_fails}.\label{fig:task}}
\end{figure}

We will make the following assumptions on the causal structure
(where henceforth we will simply write $\C{G}$ instead of $\C{G}(\C{M})$),
which are discussed in detail by \citet{Mooij++_1611.10351v3}:
\begin{assumption}[JCI Assumptions]\label{ass:jci_backgroundknowledge}
  Let $\C{G}$ be a causal graph with variables $\B{V}$ (consisting of system variables $\{X_j\}_{j\in\C{J}}$ and context variables $\{C_i\}_{i\in\C{I}}$).
\begin{compactenum}[(i)]
\item No system variable directly causes any context variable (``exogeneity'') \\ ($\forall j \in \C{J}, \forall i \in \C{I}: X_j \rightarrow C_i \notin \C{G}$);
\item No system variable is confounded with a context variable (``randomization'') \\ ($\forall j\in\C{J}, \forall i\in\C{I}: X_j \leftrightarrow C_i \notin \C{G}$);
\item Every pair of context variables is purely confounded (``genericity'') \\ ($\forall i, i' \in \C{I}: C_i \leftrightarrow C_{i'} \in \C{G} \land C_{i} \rightarrow C_{i'} \notin \C{G}$).
\end{compactenum}
\end{assumption}
The first assumption is the most crucial one that captures what we mean
by ``context''. The other two assumptions are less crucial and could be
omitted, depending on the application. For a more in-depth discussion of
these modeling assumptions and on how they compare with other possible causal
modeling approaches, we refer the reader to \citep{Mooij++_1611.10351v3}.
Any causal discovery method can in principle be used in the JCI
setting, but identifiability greatly benefits from taking into account the
background knowledge on the causal graph from
Assumption~\ref{ass:jci_backgroundknowledge}.

In addition, in order to be able to address the causal domain adaptation task, we will assume:
\begin{assumption}\label{ass:ctl}
  Let $\C{G}$ be a causal graph with variables $\B{V}$ (consisting of system variables $\{X_j\}_{j\in\C{J}}$ and context variables $\{C_i\}_{i\in\C{I}}$), and $\Prb(\B{V})$ be the corresponding distribution on $\B{V}$. Let $C_1$ be the source/target domains indicator and $Y = X_j$ the target variable.
  \begin{compactenum}[(i)]
  \item The distribution $\Prb(\B{V})$ is Markov and faithful w.r.t.\ $\C{G}$;\label{ass:ctl_markov_faithful}
    \item Any conditional independence involving $Y$ in the source domains also holds in the target domains, i.e., if $\B{A} \cup \B{B} \cup \B{S}$ contains $Y$ but not $C_1$ then:\footnote{Here, with $\B{A} \indep \B{B} \given \B{S}\ [C_1=0]$ we mean $\B{A} \indep \B{B} \given \B{S}\ [\Prb(\B{V} \given C_1=0)]$, i.e., the conditional independence of $\B{A}$ from $\B{B}$ given $\B{S}$ in the mixture of the source domains $\Prb(\B{V} \given C_1=0)$, and similarly for the target domains.}\label{ass:ctl_faithfulness_source}
      $$\B{A} \indep \B{B} \given \B{S}\ [C_1 = 0] \implies \B{A} \indep \B{B} \given \B{S}\ [C_1 = 1];$$
    \item $C_1$ has no \emph{direct} effect on $Y$ w.r.t.\ $\B{V}$, i.e., $C_1\to Y \notin \C{G}$.
    \label{ass:ctl_no_direct_effect}
  \end{compactenum}
\end{assumption}
The Markov and faithfulness assumptions are standard in constraint-based causal discovery on a single domain; we apply them here
on the ``meta-system'' composed of system and context.

Assumption~\ref{ass:ctl}(\ref{ass:ctl_faithfulness_source}) may seem non-intuitive, but as we show in the Supplementary Material, it follows from  more intuitive (but stronger) assumptions, for example if both the pooled source domains distribution $\Prb(\B{V} \given C_1 = 0)$ and the pooled target domains distribution $\Prb(\B{V} \given C_1 = 1)$ are Markov and faithful to the subgraph of $\C{G}$ which excludes $C_1$. These stronger assumptions imply that the causal structure (i.e., presence or absence of
causal relationships and confounders) of the other variables is invariant when going from source to target domains.
%
Assumption~\ref{ass:ctl}(\ref{ass:ctl_faithfulness_source}) is a weakened version of these more natural assumptions, allowing additional \emph{independences} to hold in the target domains compared to the source domains, e.g., when $C_1$ models a perfect surgical intervention.

Assumption~\ref{ass:ctl}(\ref{ass:ctl_no_direct_effect}) is strong, yet some assumption of that type
seems necessary to make the task well-defined. Without any information
at all about
the target(s) of $C_1$, or
the causal mechanism that determines the values of $Y$ in the target domains, predicting the values of $Y$ for the target domains seems generally impossible.
Note that the assumption is more likely to be satisfied if the interventions are believed to be precisely targeted, and gets weaker the more relevant system variables are observed.\footnote{This assumption can be weakened further: in some circumstances one can infer from the data and the other assumptions that $C_1$ cannot have a direct effect on $Y$.
For example: if there exists a descendant $D \in \de{Y}$, and if there exists a set $\B{S} \subseteq \B{V} \setminus (\{C_1, Y\} \cup \de{Y})$, such that $C_1 \indep D \given \B{S}$, then $C_1$ is not a direct cause of $Y$ w.r.t.\ $\B{V}$. For some proposals on alternative assumptions that can be made when this assumption is violated, see e.g., \citep{Schoelkopf_et_al_ICML_12,ZhangSMW2013,ZhangGS2015,Gong++2016}.}

As one example of a real-world setting in which these assumptions are reasonable, consider a genomics experiment, in which gene
expression levels of many different genes are measured in response to knockouts
of single genes. Given our present-day understanding of the biology of gene
expression, it is very reasonable to assume that the knockout of gene $X_i$ only
has a direct effect on the expression level of gene $X_i$ itself. As long as we
do not ask to predict the expression level of $X_i$ under a knockout of $X_i$, 
but only the expression level of other genes $Y=X_j$ with $j \ne i$,
Assumption~\ref{ass:ctl}(\ref{ass:ctl_no_direct_effect}) seems justified. 
%
%
It is also reasonable (based on present-day understanding of biology) to expect that a single gene knockout does not change the causal mechanisms in the rest of the system. This justifies Assumption~\ref{ass:ctl}(\ref{ass:ctl_faithfulness_source}) in this setting if one is willing to assume faithfulness. 

In the next subsections, we will discuss how these assumptions enable us to address the domain adaptation task.

\subsection{Separating Sets of Features}\label{sec:causal_feature_selection}

Our approach to addressing Task~\ref{task:causaldomainadaptation} is based on finding a \emph{separating set} $\B{A} \subseteq \B{V} \setminus \{C_1,Y\}$ of (context and system) variables that satisfies $C_1 \dsep Y \given \B{A}\ [\C{G}]$.
If such a separating set $\B{A}$ can be found, then the distribution of $Y$ conditional on $\B{A}$ is \emph{invariant} under transferring from the source domains to the target domains, i.e., $\Prb(Y \given \B{A}, C_1 = 0) = \Prb(Y \given \B{A}, C_1 = 1)$.
As the former conditional distribution can be estimated from the source domains data, we directly obtain a prediction for the latter, which then enables us to predict the values of $Y$ from the observed values of $\B{A}$ in the target domains.\footnote{This trivial observation is not novel; see e.g.\ \citep[Ch.~7, p.~164,][]{Spirtes2000}. It also follows as a special case of \citep[Theorem 2,][]{PearlBareinboim2011}. The main novelty of this work is the proposed strategy to identify such separating sets.}

We will now discuss the effect of the choice of $\B{A}$ on the quality of the predictions.
For simplicity of the exposition, we make use of the squared loss function and 
look at the asymptotic case, ignoring finite-sample issues. When predicting $Y$ from a subset of features $\B{A} \subseteq \B{V} \setminus \{Y,C_1\}$ (that may or may not be separating), the optimal predictor is defined as the function $\hat Y$ mapping from the range of possible values
of $\B{A}$ to the range of possible values of $Y$ that minimizes the \emph{target domains risk}
  $\Exp \big( (Y - \hat Y(\B{A}))^2 \given C_1=1 \big)$,
and is given by the
conditional expectation (regression function) 
$\hat Y_{\B{A}}^1(\B{a}) \defeq \Exp(Y \given \B{A}=\B{a},C_1=1)$.
Since $Y$ is not observed in the target domains, we cannot directly estimate this regression function from the data.

One approach that is often used in practice is to ignore the difference in distribution between source and target domains,
and use instead the predictor $\hat Y_{\B{A}}^0(\B{a}) \defeq \Exp(Y \given \B{A}=\B{a},C_1=0)$,
which minimizes the 
\emph{source domains risk} $\Exp \big( (Y - \hat Y)^2 \given C_1=0 \big)$. This approximation introduces a bias
$\hat Y_{\B{A}}^1 - \hat Y_{\B{A}}^0$
that we will refer to as the \emph{transfer bias} (when predicting $Y$ from $\B{A}$).
When ignoring that source domains and target domains have different distributions, any 
standard machine learning method can be used to predict $Y$ from $\B{A}$. As the 
transfer bias can become arbitrarily large (as we have seen in Example~\ref{ex:FSfailure}), the prediction accuracy of
this solution strategy may be arbitrarily bad (even in the infinite-sample limit).

Instead, we propose to only predict $Y$ from $\B{A}$ when the set $\B{A}$ of
features satisfies the following \emph{separating set} property:
\begin{equation}\label{eq:separatingset}
C_1 \dsep Y \given \B{A}\ [\C{G}],
\end{equation}
i.e., it d-separates $C_1$ from $Y$ in $\C{G}$. By the Markov assumption, this implies $C_1 \indep Y \given \B{A}\ [\Prb(\B{V})]$. 
In other words (as already mentioned above), for separating sets, 
the distribution of $Y$ conditional on $\B{A}$ is \emph{invariant} under transferring from the source domains to the target domains, i.e., $\Prb(Y \given \B{A}, C_1 = 0) = \Prb(Y \given \B{A}, C_1 = 1)$.
By virtue of this invariance, regression functions are identical for the source domains and target domains, i.e., $\hat Y_{\B{A}}^0 = \hat Y_{\B{A}}^1$, and hence also the source domains and target domains risks are identical when using the predictor $\hat Y_{\B{A}}^0$:
\begin{equation}\label{eq:guarantee}
  C_1 \dsep Y \given \B{A}\ [\C{G}] \implies \Exp \big( (Y - \hat Y_{\B{A}}^0)^2 \given C_1=1 \big) = \Exp \big( (Y - \hat Y_{\B{A}}^0)^2 \given C_1=0 \big).
\end{equation}
The r.h.s.\ can be estimated from the source domains data, and the l.h.s.\ equals the generalization
error to the target domains when using the predictor $\hat Y_{\B{A}}^0$ trained on the source domains
(which equals the predictor $\hat Y_{\B{A}}^1$ that one could obtain if all target domains data, including
the values of $Y$, were observed).\footnote{Note that this equation only holds asymptotically; for finite samples, in addition to the transfer from source domains
to target domains, we have to deal with the generalization from empirical to population distributions and from
the covariate shift if $\Prb(\B{A} \given C_1=1) \ne \Prb(\B{A} \given C_1=0)$ \cite[see e.g.][]{Mansour++_2009}.}
Although this approach leads to zero transfer bias, it introduces another bias: by using only a subset of the features $\B{A}$, 
rather than \emph{all available} features $\B{V} \setminus \{C_1,Y\}$, we may miss relevant information to predict $Y$. 
We refer to this bias as the \emph{incomplete information bias},
$\hat Y_{\B{V} \setminus \{Y,C_1\}}^1 - \hat Y_{\B{A}}^1$.

The total bias when using $\hat Y_{\B{A}}^0$ to predict $Y$ is the sum of the transfer bias and the incomplete information bias:
$$\underbrace{\hat Y_{\B{V} \setminus \{Y,C_1\}}^1 - \hat Y_{\B{A}}^0}_{\text{total bias}} = \underbrace{(\hat Y_{\B{A}}^1 - \hat Y_{\B{A}}^0)}_{\text{transfer bias}} + \underbrace{(\hat Y_{\B{V} \setminus \{Y,C_1\}}^1 - \hat Y_{\B{A}}^1)}_{\text{incomplete information bias}}.$$
For some problems, one may be better off by simply ignoring the transfer bias and minimizing the incomplete information bias, while for other problems, it is crucial to take
the transfer into account to obtain small generalization errors. In that situation, we could use any subset $\B{A}$ for prediction 
that satisfies the separating set property \eref{eq:separatingset}, implying zero transfer bias; obviously, the best predictions are then obtained by selecting a separating subset that also minimizes the source domains risk (i.e., minimizes the incomplete information bias). 
We conclude that this strategy of selecting a subset $\B{A}$ to predict $Y$ may yield an asymptotic guarantee on the prediction error by \eref{eq:guarantee}, whereas simply ignoring the shift in distribution may lead to unbounded prediction error, since the transfer bias could be arbitrarily large in the worst case scenario.

\subsection{Identifiability of Separating Feature Sets}


For the strategy of selecting the best separating sets of features as discussed in Section~\ref{sec:causal_feature_selection}, we need to find one or more sets $\B{A} \subseteq \B{V} \setminus \{C_1,Y\}$ that satisfy \eref{eq:separatingset}. Of course, the problem is that we cannot directly test this in the data, because the values of $Y$ are missing for $C_1 = 1$. Note that also Assumption~\ref{ass:ctl}(\ref{ass:ctl_faithfulness_source}) cannot be directly used here, because it only applies when $C_1$ is \emph{not} in $\B{A} \cup \B{B}$.
When the causal graph $\C{G}$ is known, it is easy to verify whether \eref{eq:separatingset} holds directly using d-separation.
Here we address the more challenging setting in which the causal graph and the targets of the interventions are (partially) unknown.\footnote{Another option, proposed by \citet{Rojas-Carulla++_2018}, is to \emph{assume} that 
if $p(Y \given \B{A})$ is invariant across all source domains (i.e., $p(Y \given \B{A}, C_1=0, C_{\setminus 1}=c) = p(Y \given \B{A}, C_1=0)$ for all $c$), then the same holds across all source \emph{and} target domains (i.e., $p(Y \given \B{A}, C_1=1) = p(Y \given \B{A}, C_1=0, C_{\setminus 1}=c)$ for all $c$). This assumption can be violated in some simple cases, e.g. see Example~\ref{ex:identifiable_but_fs_fails}.} 
Conceptually, one could estimate a set of possible causal graphs by using a causal discovery algorithm (for example, extending any standard method to deal with the missing conditional independence tests in $C_1=1$), and then read off separating sets from these graphs. 
In practice, it is not necessary to estimate completely these causal graphs: we only need to know enough about them to verify or falsify whether a given set of features separates $C_1$ from $Y$. 
The following example (with details in the Supplementary Material) illustrates a case where such reasoning allows us to identify a separating set.

\begin{example}\label{ex:identifiable_but_fs_fails}
  Assume that Assumptions~\ref{ass:jci_backgroundknowledge} and \ref{ass:ctl} hold for two context variables $C_1,C_2$ and three system variables $X_1,X_2,X_3$ with $Y \defeq X_2$.
  If the following conditional (in)dependences all hold in the source domains:
  \begin{equation}\label{eq:identifiable_but_fs_fails}C_2 \indep X_2 \given X_1\ [C_1 = 0], \qquad C_2 \notindep X_2 \given \emptyset\ [C_1 = 0], \qquad C_2 \indep X_3 \given X_2 \ [C_1=0],
  \end{equation}
  then $C_1 \dsep X_2 \given X_1\ [\C{G}]$, i.e., $\{X_1\}$ is a separating set for $C_1$ and $X_2$. One possible causal graph leading to those (in)dependences is provided in Figure~\ref{fig:task} (the others are shown in Figure~1(a) in the Supplementary Material). 
  For that ADMG, and given enough data, feature selection applied to the source domains data will generically select $\{X_1,X_3\}$ as the optimal set of features for predicting $Y \defeq X_2$, which can lead to an arbitrarily large prediction error. On the other hand, the set $\{X_1\}$ is separating in any ADMG satisfying (\ref{eq:identifiable_but_fs_fails}), so using it to predict $Y$ leads to zero transfer bias, and therefore provides a guarantee on the target domains risk (i.e., it provides an upper bound on the optimal target domains risk, which can be estimated from the source domains data).
\end{example}

Rather than characterizing by hand all possible situations in which a separating set can be identified (like in Example~\ref{ex:identifiable_but_fs_fails}), in this work we delegate the causal inference to an automatic theorem prover. Intuitively, the idea is to provide the automatic theorem prover with the conditional (in)dependences that hold in the data, in combination with an encoding of Assumptions~\ref{ass:jci_backgroundknowledge} and \ref{ass:ctl} into logical rules, and ask the theorem prover whether it can prove that $C_1 \dsep Y \given \B{A}$ holds for a candidate set $\B{A}$ from the assumptions and provided conditional (in)dependences. There are three possibilities: either it can prove the query (and then we can proceed to predict $Y$ from $\B{A}$ and get an estimate of the target domains risk), or it can disprove the query (and then we know $\B{A}$ will generically give predictions that suffer from an arbitrarily large transfer bias), or it can do neither (in which case hopefully another subset $\B{A}$ can be found that does provably satisfy \eref{eq:separatingset}).

\subsection{Algorithm}\label{sec:implementation} 


A simple (brute-force) algorithm that finds the best separating set as described in Section~\ref{sec:causal_feature_selection} is the following. By using a standard feature selection method, produce a ranked list of subsets $\B{A} \subseteq \B{V} \setminus \{Y,C_1\}$, ordered ascendingly with respect to the empirical source domains risks. Going through this list of subsets (starting with the one with the smallest empirical source domains risk), test whether the separating set property can be inferred from the data by querying the automated theorem prover. If \eref{eq:separatingset} can be shown to hold, use that subset $\B{A}$ for prediction of $Y$ and stop; if not, continue with the next candidate subset $\B{A}$ in the list. If no subset satisfies \eref{eq:separatingset}, abstain from making a prediction.\footnote{Abstaining from predictions can be advantageous when trading off recall and precision. If a prediction \emph{has} to be made, we can fall back on some other method or simply accept the risk that the transfer bias may be large.}

An important consequence of Assumption~\ref{ass:ctl}(\ref{ass:ctl_faithfulness_source}) is that it enables us to transfer conditional independence involving the target variable from the source domains to the target domains (proof provided in the Supplementary Material): 
\begin{proposition}\label{prop:transferCI}
  Under Assumption~\ref{ass:ctl}, 
  \begin{equation*}
    \B{A} \indep \B{B} \given \B{S}\ [C_1=0] \iff \B{A} \indep \B{B} \given \B{S} \cup \{C_1\} \iff \B{A} \dsep \B{B} \given \B{S} \cup \{C_1\}\ [\C{G}] 
  \end{equation*}
  for subsets $\B{A},\B{B},\B{S} \subseteq \B{V}$ such that their union contains $Y$ but not $C_1$.
\end{proposition}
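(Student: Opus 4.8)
The plan is to prove the two equivalences separately, using the middle statement $\B{A}\indep\B{B}\given\B{S}\cup\{C_1\}\ [\Prb(\B{V})]$ (conditional independence in the full joint distribution) as a bridge between the data-level statement on the left and the graphical statement on the right.

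For the equivalence $\B{A}\indep\B{B}\given\B{S}\cup\{C_1\}\ [\Prb(\B{V})]\iff\B{A}\dsep\B{B}\given\B{S}\cup\{C_1\}\ [\C{G}]$, I would simply invoke Assumption~\ref{ass:ctl}(\ref{ass:ctl_markov_faithful}): the Markov property of $\Prb(\B{V})$ w.r.t.\ $\C{G}$ gives ``$\Longleftarrow$'' (d-separation implies conditional independence), and faithfulness of $\Prb(\B{V})$ w.r.t.\ $\C{G}$ gives ``$\Longrightarrow$'' (no conditional independences beyond those encoded by d-separation). Here one uses Markov and faithfulness of the \emph{full} distribution $\Prb(\B{V})$ — which is what Assumption~\ref{ass:ctl}(\ref{ass:ctl_markov_faithful}) grants — rather than any (possibly unavailable) such property for the pooled source or target distributions; since $C_1$ is a node of $\C{G}$, d-separation statements involving $C_1$ are admissible.

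For the equivalence $\B{A}\indep\B{B}\given\B{S}\ [C_1=0]\iff\B{A}\indep\B{B}\given\B{S}\cup\{C_1\}\ [\Prb(\B{V})]$, the key observation is that $C_1$ is binary, being the source/target indicator with $\Prb(C_1=0)>0$ and $\Prb(C_1=1)>0$ (part of the problem setup). Hence conditioning on the variable $C_1$ amounts to conditioning separately on each of its two values, so the right-hand statement holds if and only if both $\B{A}\indep\B{B}\given\B{S}\ [C_1=0]$ and $\B{A}\indep\B{B}\given\B{S}\ [C_1=1]$ hold (noting that the conditional law of $(\B{A},\B{B},\B{S})$ given $C_1=c$ is exactly $\Prb(\cdot\given C_1=c)$). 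The direction ``$\Longleftarrow$'' is then immediate, as the $C_1=0$ case is contained in the combined statement. For ``$\Longrightarrow$'', besides the given $[C_1=0]$ independence I additionally need $\B{A}\indep\B{B}\given\B{S}\ [C_1=1]$, and this is supplied by Assumption~\ref{ass:ctl}(\ref{ass:ctl_faithfulness_source}), whose hypothesis — that $\B{A}\cup\B{B}\cup\B{S}$ contains $Y$ but not $C_1$ — coincides exactly with the hypothesis of the proposition. Chaining the two equivalences yields the claim.

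I do not expect a serious obstacle here; the proof is essentially bookkeeping. The only point that needs care is the second step: making explicit that ``conditioning on the event $C_1=0$'' is compatible with ``conditioning on the random variable $C_1$'', and that the binary range of $C_1$ is what lets a single conditional independence over the source distribution combine with its Assumption~\ref{ass:ctl}(\ref{ass:ctl_faithfulness_source}) counterpart over the target distribution to produce conditional independence given $C_1$ in $\Prb(\B{V})$. It is also worth recording explicitly that Assumption~\ref{ass:ctl}(\ref{ass:ctl_faithfulness_source}) is used \emph{only} for the ``$\Longrightarrow$'' direction of the first equivalence, while the remaining implications are unconditional consequences of the binary nature of $C_1$ and of Markov/faithfulness.
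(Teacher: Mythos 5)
Your proposal is correct and follows essentially the same route as the paper's proof: the left--middle equivalence is obtained by combining the given source-domain independence with Assumption~\ref{ass:ctl}(\ref{ass:ctl_faithfulness_source}) to get independence in both strata of the binary $C_1$ (the converse being immediate since conditioning on $C_1$ restricts to each value), and the middle--right equivalence is exactly the Markov and faithfulness statement of Assumption~\ref{ass:ctl}(\ref{ass:ctl_markov_faithful}). Your write-up merely spells out the stratification argument that the paper compresses into ``by definition.''
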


To test the separating set condition \eref{eq:separatingset}, 
we use the approach proposed by \citet{antti}, where we simply add the JCI assumptions (Assumption~\ref{ass:jci_backgroundknowledge}) as constraints on the optimization problem, in addition to the domain-adaptation specific assumption that $C_1 \to Y \notin \C{G}$ (Assumption~\ref{ass:ctl}(\ref{ass:ctl_no_direct_effect})). As inputs we use all directly testable conditional independence test p-values $p_{\B{A} \indep \B{B} \given \B{S}}$ in the pooled data (when
$Y \not \in \B{A} \cup \B{B} \cup \B{S}$) and all those resulting from Proposition~\ref{prop:transferCI} from the source domains data only (if $Y \in \B{A} \cup \B{B} \cup \B{S}$). If background knowledge on intervention targets or the causal graph is available, it can easily be added as well.
We use the method proposed by \citet{ACI} to query for the confidence of whether some statement (e.g., $Y \indep C_1 \given \B{A}$) is true or false. The results of \citet{ACI} show that this approach is sound under oracle inputs, and asymptotically consistent whenever the statistical conditional independence tests used are asymptotically consistent. In other words, in this way the probability of wrongly deciding whether a subset $\B{A}$ is a separating set converges to zero as the sample size increases.
We chose this approach because it is simple to implement on top of existing open source code.\footnote{We build on the source code provided by \citet{ACI} which in turn extends the source code provided by \citet{antti}. The full source code of our implementation and the experiments is available online at \url{https://github.com/caus-am/dom_adapt}.} Note that the computational cost quickly increases with the number of variables, limiting the number of variables that can be considered simultaneously.

One remaining issue is how to predict $Y$ when an optimal separating set $\B{A}$ has been found. As the distribution of $\B{A}$ may shift when transferring from source domains to target domains, this means that there is a \emph{covariate shift} to be taken into account when predicting $Y$. Any method (e.g., least-squares regression) could in principle be used to predict $Y$ from a given set of covariates, but it is advisable to use a prediction method that works well under covariate shift, e.g., \citep{sugiyama2008direct}.



\section{Evaluation}\label{sec:results}

We perform an evaluation on both synthetic data and a real-world dataset based on a causal inference challenge.\footnote{Part of the CRM workshop on Statistical Causal Inference and Applications to Genetics, Montreal, Canada (2016). See also \url{http://www.crm.umontreal.ca/2016/Genetics16/competition_e.php}} The latter dataset consists of hematology-related measurements from the International Mouse Phenotyping Consortium (IMPC), which collects measurements of phenotypes of mice with different single-gene knockouts.

In both evaluations we compare a standard feature selection method (which uses Random Forests) with our method that builds on top of it and selects from its output the best separating set. First, we score all possible subsets of features by their out-of-bag score using the implementation of Random Forest Regressor from \texttt{scikit-learn} \citep{scikit-learn} with default parameters. For the baseline we then select the best performing subset and predict $Y$.
Instead, for our proposed method we try to find a subset of features $\B{A}$ that is also a separating set, starting from the subsets with the best scores. To test whether $\B{A}$ is a separating set, we use the method described in Section~\ref{sec:implementation}, using the ASP solver \texttt{clingo 4.5.4} \citep{clingo}. We provide as inputs the independence test results from a partial correlation test with significance level $\alpha=0.05$ and combine it with the weighting scheme from \citet{ACI}. We then use the first subset $\B{A}$ in the ranked list of predictive sets of features found by the Random Forest method for which the confidence that $C_1 \dsep Y \given \B{A}$ holds is positive. If there is no set $\B{A}$ that satisfies this criterion, then we abstain from making a prediction.

For the synthetic data, we generate randomly 200 linear acyclic models with latent variables and Gaussian noise, each with three system variables, and sample $N$ data points each for the observational and two experimental domains, where we simulate soft interventions on randomly selected targets, with different sizes of perturbations.
We randomly select which of the two context variables will be $C_1$ and which of the three system variables will be $Y$. We disallow direct effects of $C_1$ on $Y$, and enforce that no intervention can directly affect all variables simultaneously. More details on how the data were simulated are provided in the Supplementary Material.
Figure~\ref{fig:synthresults} shows a boxplot of the $L_2$ loss of the predicted $Y$ values with respect to the true values for both the baseline and our method, considering the 121 cases out of 200 in which our method does produce an answer. In particular, Figure~\ref{fig:synthresults} considers the case of $N=1000$ samples per regime and interventions that all produce a large perturbation. In the Supplementary Material we show that results improve with more samples, both for the baseline, but even more so for our method, since the quality of the conditional independence tests improves. We also show that, according to expectations, if the target distribution is very similar to the source distributions, i.e., the transfer bias is small, our method does not provide any benefit and seems to perform worse than the baseline. Conversely, the larger the intervention effect, the bigger the advantage of using our method.

For the real-world dataset, we select a subset of the variables considered in the CRM Causal Inference Challenge. Specifically, for simplicity we focus on 16 phenotypes that are not deterministically related to each other. The dataset contains measurements for 441 ``wild type'' mice and for about 10 ``mutant'' mice for each of 13 different single gene knockouts. We then generate 1000 datasets by randomly selecting subsets of 3 variables and 2 gene knockout contexts, and always include also ``wild type'' mice. For each dataset we randomly choose $Y$ and $C_1$, and leave out the observed values of $Y$ for $C_1=1$. Figure~\ref{fig:miceresults} shows a boxplot of the $L_2$ loss of the predicted $Y$ values with respect to the real values for the baseline and our method. Given the small size of the datasets, this is a very challenging problem. In this case, our method abstains from making a prediction for 170 cases out of 1000 but performs similarly to the baseline on the remaining cases.

\begin{figure}[t]
\centering
  \begin{subfigure}[t]{0.4\textwidth}%
    \centering%
    \includegraphics[width=\textwidth]{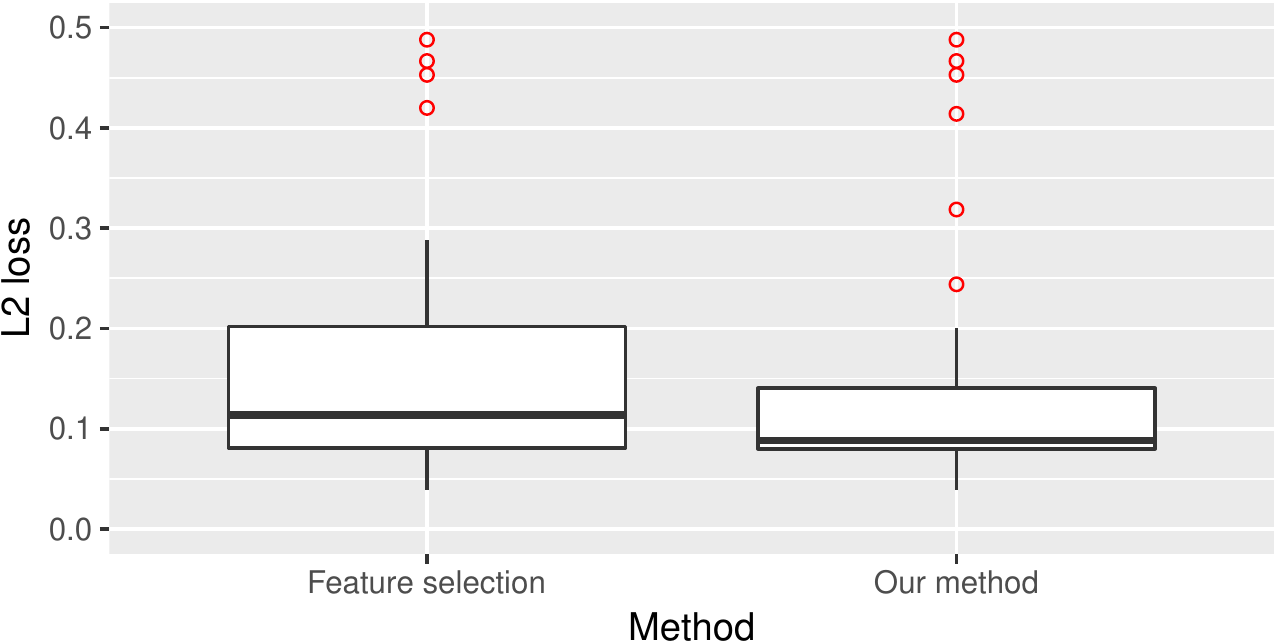}%
    \caption{Synthetic data with $N=1000$ samples and a large perturbation}\label{fig:synthresults}
  \end{subfigure}\qquad
  \begin{subfigure}[t]{0.4\textwidth}%
    \centering%
    \includegraphics[width=\textwidth]{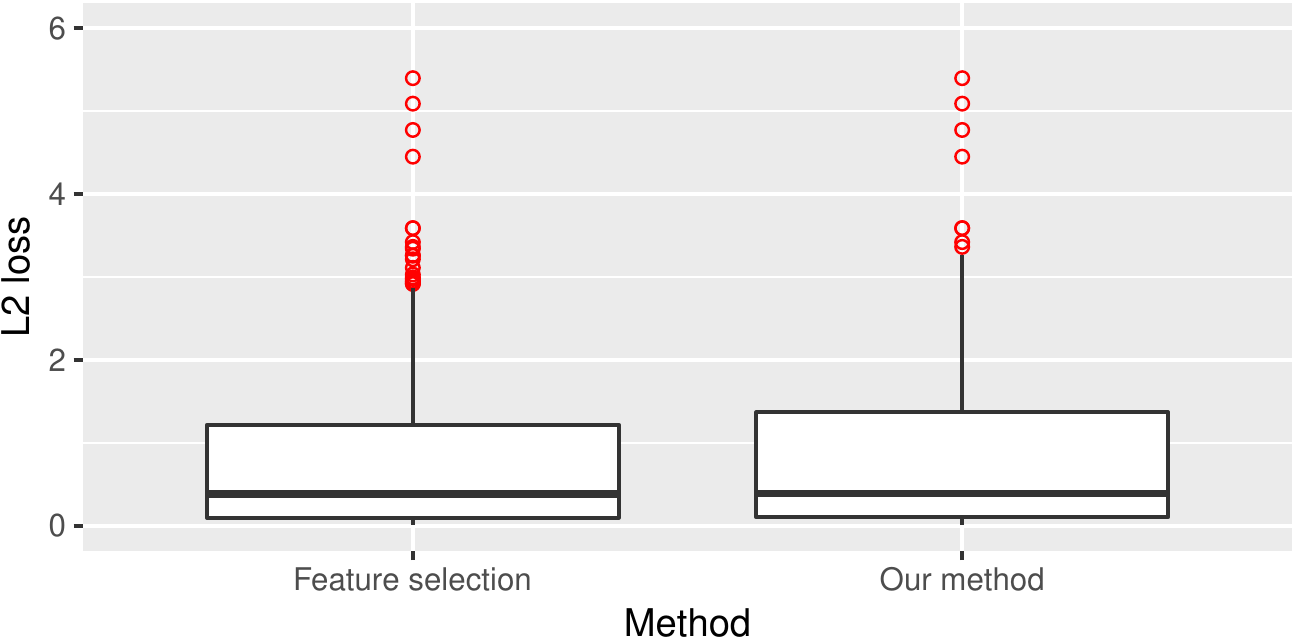}%
    \caption{Real-world data}\label{fig:miceresults}
  \end{subfigure}
  \caption{Evaluation results (see main text and Supplementary Material for details).
\label{fig:results}}
\end{figure}

\section{Discussion and Conclusion}\label{sec:discussion}

We have defined a general class of causal domain adaptation problems and proposed a method that can identify sets of features that lead to transferable predictions. Our assumptions are quite general and in particular do not require the causal graph or the intervention targets to be known. The method gives promising results on simulated data. It is straightforward to extend our method to the cyclic case by making use of the results by \citet{ForreMooij_UAI_18}.
More work remains to be done on the implementation side, for example, scaling up to more variables. Currently, our approach can handle about seven variables on a laptop computer, and with recent advances in exact causal discovery algorithms \citep[e.g.,][]{RantanenHyttinenJarvisalo2018}, a few more variables would be feasible. For scaling up to dozens of variables, we plan to adapt constraint-based causal discovery algorithms like FCI \citep{Spirtes2000} to deal with the missing-data aspect of the domain adaptation task. We hope that this work will also inspire further research on the interplay between bias, variance and causality from a statistical learning theory perspective.



\newpage
\subsubsection*{Acknowledgments}
We thank Patrick Forr\'e for proofreading a draft of this work. We thank Ren\'ee van Amerongen and Lucas van Eijk for sharing their domain knowledge about the hematology-related measurements from the International Mouse Phenotyping Consortium (IMPC).
SM, TC, SB, and PV were supported by NWO, the Netherlands Organization for Scientific Research (VIDI grant 639.072.410). 
SM was also supported by the Dutch programme COMMIT/ under the Data2Semantics project.
TC was also supported by NWO grant 612.001.202 (MoCoCaDi), and EU-FP7 grant agreement n.603016 (MATRICS).
TvO and JMM were supported by the European Research Council (ERC) under the European Union's Horizon 2020 research and
innovation programme (grant agreement 639466).



{
\small

}

\newpage
\appendix
\section{Supplementary material}

\subsection{Stronger assumption}

We prove that Assumption \ref{ass:ctl}(\ref{ass:ctl_faithfulness_source}) is a weakened version of two more standard assumptions, i.e., the causal Markov and faithfulness assumptions in both source and target domains separately. 
Note that assuming these two assumptions instead of Assumption~2(ii) implies we cannot have perfect interventions in the target domain, which is otherwise allowed.

\begin{proposition}\label{prop:ctl_faithfulness_source_and_target}
Assumption \ref{ass:ctl}(\ref{ass:ctl_faithfulness_source}) 
is implied by the following assumption:
\begin{compactenum}[(a)]
\item the pooled source domains distribution $\Prb(\B{V} \given C_1 = 0)$ is Markov and faithful to $\C{G}^{\setminus C_1}$, and
\item the pooled target domains distribution $\Prb(\B{V} \given C_1 = 1)$ is Markov and faithful to $\C{G}^{\setminus C_1}$,
\end{compactenum}
where $\C{G}^{\setminus C_1}$ denotes the induced subgraph of the causal graph $\C{G}$ on the
nodes $\C{V} \setminus \{C_1\}$ (i.e., it is obtained by removing $C_1$ and all edges involving $C_1$ from 
the causal graph $\C{G}$).
\end{proposition}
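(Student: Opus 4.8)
The plan is to chain the two assumptions through d-separation in the reduced graph $\C{G}^{\setminus C_1}$, using faithfulness of the \emph{source} distribution to go from a conditional independence to a d-separation, and the Markov property of the \emph{target} distribution to go back from that d-separation to a conditional independence. Both steps are legitimate because the sets involved never contain $C_1$, so they live entirely among the nodes of $\C{G}^{\setminus C_1}$.

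Concretely, let $\B{A},\B{B},\B{S} \subseteq \B{V}$ with $Y \in \B{A}\cup\B{B}\cup\B{S}$ and $C_1 \notin \B{A}\cup\B{B}\cup\B{S}$, and suppose the hypothesis of Assumption~\ref{ass:ctl}(\ref{ass:ctl_faithfulness_source}) holds, i.e.\ $\B{A}\indep\B{B}\given\B{S}\ [C_1=0]$. First I would observe that $\B{A}\cup\B{B}\cup\B{S}$ is contained in the node set $\B{V}\setminus\{C_1\}$ of $\C{G}^{\setminus C_1}$, so assumption (a) (faithfulness of $\Prb(\B{V}\given C_1=0)$ w.r.t.\ $\C{G}^{\setminus C_1}$) applies and yields $\B{A}\dsep\B{B}\given\B{S}\ [\C{G}^{\setminus C_1}]$. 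Next I would apply assumption (b), specifically the Markov property of $\Prb(\B{V}\given C_1=1)$ w.r.t.\ $\C{G}^{\setminus C_1}$: this d-separation implies $\B{A}\indep\B{B}\given\B{S}\ [C_1=1]$, which is exactly the conclusion required.

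The only points worth checking carefully — and they are routine — are that $\C{G}^{\setminus C_1}$, being an induced subgraph of an ADMG, is again an ADMG on which m-separation and the Markov/faithfulness notions are well-defined, and that, since neither the separated sets nor the conditioning set ever contains $C_1$, deleting $C_1$ does not change any of the relevant separation statements. Note also that only faithfulness in the source and the Markov property in the target are actually invoked, so the hypotheses (a)--(b) are in fact slightly stronger than needed; I would state this as a small remark after the proof. There is no substantive obstacle here: the argument is a direct two-step implication once the bridging role of $\C{G}^{\setminus C_1}$ is made explicit.
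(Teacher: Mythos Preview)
Your proposal is correct and follows essentially the same approach as the paper: both chain the source-domain independence through a d-separation in $\C{G}^{\setminus C_1}$ and back to a target-domain independence. Your write-up is slightly more explicit than the paper's (which simply states the biconditional $\B{A}\indep\B{B}\given\B{S}\ [C_1=c]\iff\B{A}\dsep\B{B}\given\B{S}\ [\C{G}^{\setminus C_1}]$ for $c\in\{0,1\}$), and your remark that only faithfulness in (a) and the Markov property in (b) are actually used is a nice observation the paper does not spell out.
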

\begin{proof}
Let $\B{A},\B{B},\B{S} \subseteq \B{V} \setminus \{C_1\}$. By assumption, we have that
  $$\B{A} \indep \B{B} \given \B{S}\ [C_1 = c] \iff \B{A} \dsep \B{B} \given \B{S}\ [\C{G}^{\setminus C_1}]$$
  holds for both $c=0,1$, which directly gives Assumption~\ref{ass:ctl}(\ref{ass:ctl_faithfulness_source}).
\end{proof}

\subsection{Other proofs}

\begin{proof}[Proof of Proposition~\ref{prop:transferCI}]
  First of all, 
  $\B{A} \dep \B{B} \given \B{S}\ [C_1=0]$ implies (by definition) $\B{A} \dep \B{B} \given \B{S} \cup \{C_1\}$. Second, $\B{A} \indep \B{B} \given \B{S}\ [C_1=0]$ implies (by assumption) $\B{A} \indep \B{B} \given \B{S}\ [C_1=1]$, and taken together,
  we get $\B{A} \indep \B{B} \given \B{S} \cup \{C_1\}$. By the Markov and faithfulness assumption (Assumption~\ref{ass:ctl}(i)), this holds iff $\B{A} \dsep \B{B} \given \B{S} \cup \{C_1\}\ [\C{G}]$.
\end{proof}

\begin{proof}[Proof of Example~\ref{ex:identifiable_but_fs_fails}]
\begin{figure}[b]
  \centering
  \begin{subfigure}[b]{0.32\textwidth}%
    \centering\captionsetup{width=.7\linewidth}%
    \begin{tikzpicture}
        \node[var] (C2) at (1,2) {$C_2$};
        \node[var] (A) at (0,1) {$X_1$};
        \node[var] (Y) at (0,0) {$X_2$};
        \draw[arr] (C2) edge (A);
        \draw[arr] (A) edge (Y);
    \end{tikzpicture}
    \caption{Marginal ADMG $\C{G}'$.\label{fig:example2_G'}}
  \end{subfigure}%
  \begin{subfigure}[b]{0.30\textwidth}%
    \centering\captionsetup{width=.7\linewidth}%
    \begin{tikzpicture}
      \node[var] (C1) at (-1,2) {$C_1$};
      \node[var] (C2) at (1,2) {$C_2$};
      \node[var] (A) at (0,1) {$X_1$};
      \node[var] (Y) at (0,0) {$X_2$};
      \draw[arr,dashed] (C1) edge (A);
      \draw[arr] (C2) edge (A);
      \draw[arr] (A) edge (Y);
      \draw[biarr,bend left] (C1) edge (C2);
    \end{tikzpicture}
    \caption{Set of candidate marginal ADMGs $\C{G}''$.\label{fig:example2_G''}}
  \end{subfigure}%
  \begin{subfigure}[b]{0.30\textwidth}%
    \centering\captionsetup{width=.7\linewidth}%
    \begin{tikzpicture}
      \node[var] (C1) at (-1,2) {$C_1$};
      \node[var] (C2) at (1,2) {$C_2$};
      \node[var] (A) at (0,1) {$X_1$};
      \node[var] (Y) at (0,0) {$X_2$};
      \node[var] (Z) at (-1,-1) {$X_3$};
      \draw[arr,dashed] (C1) edge (A);
      \draw[arr] (C2) edge (A);
      \draw[arr] (A) edge (Y);
      \draw[biarr,bend left] (C1) edge (C2);
      \draw[arr, dashed] (C1) edge (Z);
      \draw[arr, dashed] (Y) edge (Z);
    \end{tikzpicture}
  \caption{Set of candidate \mbox{ADMGs} $\C{G}$.\label{fig:example2_G}}
  \end{subfigure}%
  \caption{ADMGs for proof of Example~\ref{ex:identifiable_but_fs_fails}. Each dashed edge can either be present or absent.}
\end{figure}
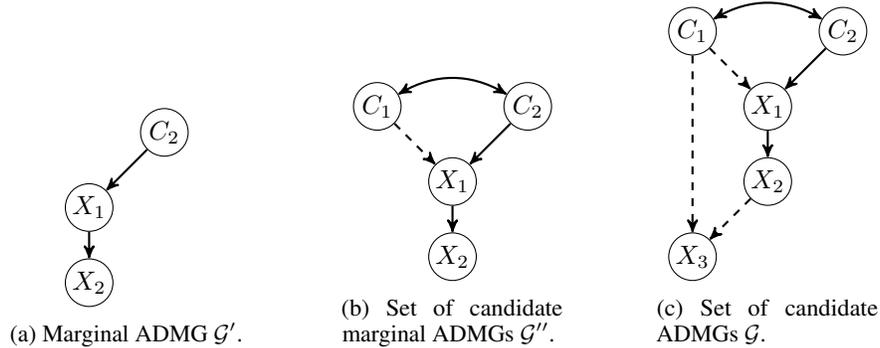
In the JCI setting, we assume that in the full ADMG $\C{G}$ over variables $\{C_1,C_2,X_1,X_2,X_3\}$, $C_1$ and $C_2$ are confounded and not caused by system variables $X_1,X_2,X_3$. Furthermore, no pair of system variable and context variables is confounded.

In the context $[C_1 =0]$, if the conditional independences $C_2 \indep X_2 \given X_1\ [C_1 = 0]$ and $C_2 \notindep X_2 \given \emptyset\ [C_1 = 0]$ hold, then we can also derive that $C_2 \notindep X_1 \given \emptyset\ \ [C_1 = 0]$, for example using Rule (9) from \cite{ACI}. 
Moreover, we know that $C_2$ is not caused by $X_1$ and $X_2$, or in other words $X_1 \notcauses C_2$ and $X_2 \notcauses C_2$. 
Thus we conclude that $(C_2,X_1,X_2)$ is an LCD triple \citep{Cooper1997} in the context $C_1 = 0$. Since in addition, in this case $C_2$ and $X_1$ are unconfounded, 
  the marginal ADMG $\C{G'}$ on $\{C_2,X_1,X_2\}$ (in the context $C_1=0$, and hence by Proposition~\ref{prop:transferCI} in all contexts) must be given by Figure~\ref{fig:example2_G'}.

Therefore, the extended marginal ADMG $\C{G''}$ on variables $\{C_1,C_2,X_1,X_2\}$ must also have a directed path from $C_2$ to $X_1$ and from $X_1$ to $X_2$. 
$C_1$ cannot be on these paths, as none of the variables causes $C_1$, and therefore $\C{G''}$ also contains the directed edges $C_2 \to X_1$ and $X_1 \to X_2$. Moreover, $\C{G''}$ cannot contain any edge between $C_2$ and $X_2$, nor a bidirected edge between $X_1$ and $X_2$, because that would violate the conditional independence. By construction, in the JCI setting there is a bidirected edge between $C_1$ and $C_2$, and that is the only bidirected edge connecting to $C_1$ or $C_2$. As we assumed there is no direct effect of $C_1$ on target $X_2$, there is no edge between $C_1$ and $X_2$ in $\C{G''}$.
There is also no directed edge $X_1 \to C_1$ in $\C{G''}$, as the JCI assumption implies none of the other variables causes $C_1$. Therefore, the marginal ADMG $\C{G''}$ is given by Figure~\ref{fig:example2_G''},
either with the directed edge $C_1\to X_1$ present, or without that edge. 

If it additionally holds that $C_2 \indep X_3 \given X_2 \ [C_1 = 0]$, we have two possibilities: 
\begin{compactenum}
\item if $C_2 \indep X_3 \given \emptyset\ [C_1 = 0]$ holds, then $X_3$ is not caused by $C_2$. This means it cannot be on any directed path from $C_2$ to $X_1$, from $X_1$ to $X_2$, or be a descendant of $X_2$. Therefore the full ADMG $\C{G}$ also necessarily contains the directed edges $C_2 \to X_1$ and $X_1 \to X_2$. 
\item if $C_2 \notindep X_3 \given \emptyset\ [C_1 = 0]$ holds, then in conjunction with $C_2 \indep X_3 \given X_2 \ [C_1 = 0]$ we can derive $X_2 \causes X_3$, for example using Rule (5) from \citep{ACI}. 
This means $X_3$ must be a descendant of $X_2$ in the full ADMG $\C{G}$, which implies it cannot be on the directed path from $C_2$ to $X_1$, or on the one from $X_1$ to $X_2$.
Therefore the full ADMG $\C{G}$ also necessarily contains the directed edges $C_2 \to X_1$ and $X_1 \to X_2$. 
\end{compactenum}
Because of the independence statements and JCI assumptions, there cannot be a bidirected edge between $X_3$ and $X_1$, $X_2$, $C_1$ or $C_2$. Similarly, there cannot be directed edges from $X_3$ to one of those nodes. The edges $X_1 \to X_3$ and $C_2 \to X_3$ must also be absent.

In both cases, there can be a directed edge from $C_1$ to $X_3$. 
Therefore, the full ADMG $\C{G}$ is of the form given in Figure~\ref{fig:example2_G}.
In all cases we see that $C_1 \dsep X_2 \given X_1\ [\C{G}]$, and we conclude that $\{X_1\}$ is a valid separating set. 


If the ADMG is as in Figure~\ref{fig:task}, then a standard feature selection method would asymptotically prefer the subset $\{X_1,X_3\}$ to predict $X_2$ over the subset $\{X_1\}$ (note that the Markov blanket of $X_2$ in context $[C_1=0]$ is $\{X_1,X_3\}$).
As a result, any prediction method trained on all available features using source domain data (i.e., in context $[C_1=0]$) may incur a possibly unbounded prediction error when used to predict $X_2$ in the target domain $[C_1=1]$ (for example, if $X_3$ is an almost deterministic copy of $X_2$ if $C_1=0$, but has a drastically different distribution if $C_1=1$).
\end{proof}

\subsection{Additional results on synthetic data}

We provide more information and experimental results for the synthetic data. We adapted the simulator of \citet{antti} to our setting. We generate randomly 200 acyclic models with three system variables, two context variables, and at most two latent variables (chosen randomly, so that the number of latent variables equals 1 or 2 each with probability $1/4$, and 0 otherwise). Each latent variable has two system variables as children, while the other variables have a random number of system variables as children, where system variables must be consistent with a chosen topological ordering, and where we enforce that a context variable may not simultaneously affect all system variables. The system and latent variables are each described by a linear structural equation with independent noise terms distributed as $\mathcal{N}(0, 0.0064)$. In these equations, each variable is multiplied by a coefficient sampled from $\mathcal{N}(0.2, 0.64)$ or $\mathcal{N}(-0.2, 0.64)$ (each with probability $1/2$ per variable). The context variables each correspond to an experimental domain; in their domain, that variable equals 1, otherwise it equals 0. This way, we simulate soft interventions. In order to scale the effect of these interventions, we multiply the coefficients of the context variables by the parameter $\gamma$, varying it from 0.1 to 100. We sample $N$ data points each for the observational and two experimental domains. Moreover, we randomly select $C_1$ and $Y$ from context and system variables respectively. We disallow direct effects of $C_1$ on $Y$.

As expected, our method performs well when the target distribution is significantly different from the source distributions. Figure~\ref{fig:extra_results_ifactor} shows different settings with different scales of intervention effects. (In most graphs, the vertical axis has been adjusted to clearly show the boxplot, but leaving out the larger outliers.) In Figure~\ref{fig:synthresults_vsmall} the intervention effects are all scaled by 0.1, resulting in very similar distributions in all domains. In this case, using our method does not offer any advantage with respect to the baseline and it actually performs worse. In the other cases, using our method starts to pay off in terms of prediction accuracy, and the difference increases with the scale of the interventions, as seen in Figure~\ref{fig:synthresults_large}.

In Figure~\ref{fig:extra_results_samples}, we vary the number of samples $N$ for each regime. The results improve with more samples, especially for our method, since the quality of the conditional independence test improves, but also for the baseline. In particular, as shown in Figure~\ref{fig:synthresults_100}, the accuracy is low for $N=100$ samples, but it improves substantially with $N=1000$ samples (Figure~\ref{fig:synthresults_small}).

\begin{figure}[t]
\centering
  \begin{subfigure}[t]{0.45\textwidth}
    \centering%
    \includegraphics[width=\textwidth]{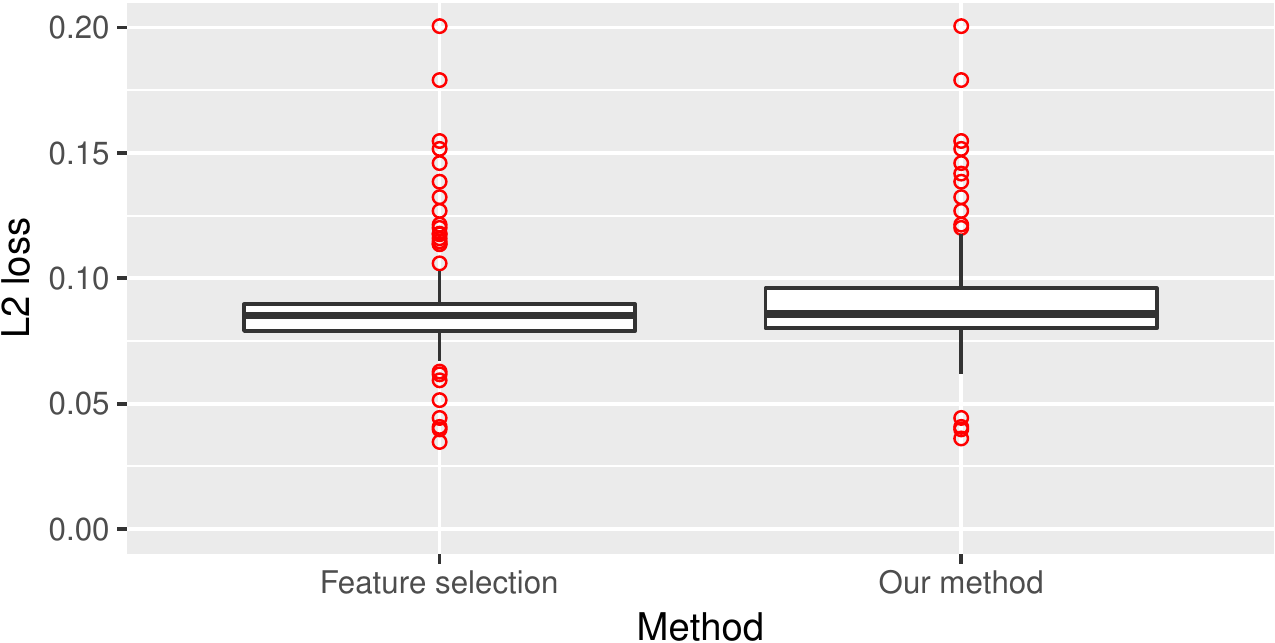}%
    \caption{Synthetic data with a small perturbation ($\gamma=0.1$) and $N=1000$ samples.\label{fig:synthresults_vsmall}}
  \end{subfigure}\qquad
  \begin{subfigure}[t]{0.45\textwidth}
    \centering%
    \includegraphics[width=\textwidth]{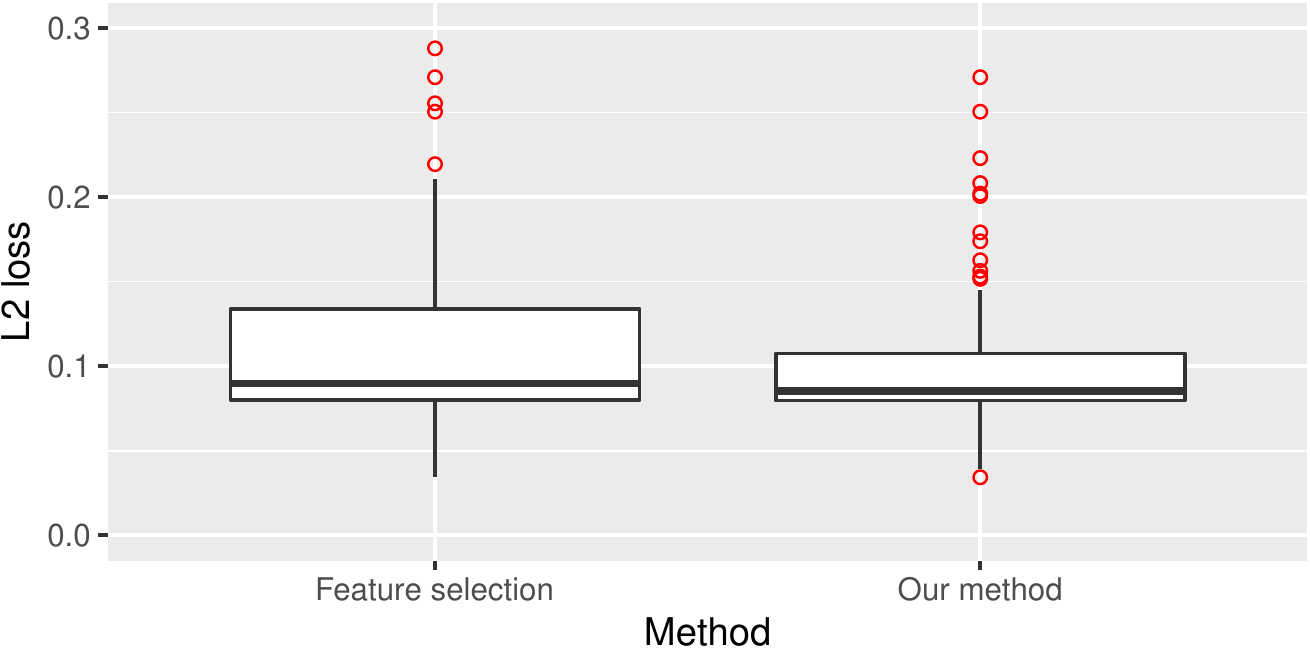}%
    \caption{Synthetic data with a medium perturbation ($\gamma=1$) and $N=1000$ samples.\label{fig:synthresults_small}}
  \end{subfigure}\\
      \begin{subfigure}[t]{0.45\textwidth}
    \centering%
    \includegraphics[width=\textwidth]{sim_200_005_1000_1000_ifactor10_zoom05}%
    \caption{Synthetic data with a large perturbation ($\gamma=10$) and $N=1000$ samples.\label{fig:synthresults_mediuml}}
  \end{subfigure}\qquad
    \begin{subfigure}[t]{0.45\textwidth}
    \centering%
    \includegraphics[width=\textwidth]{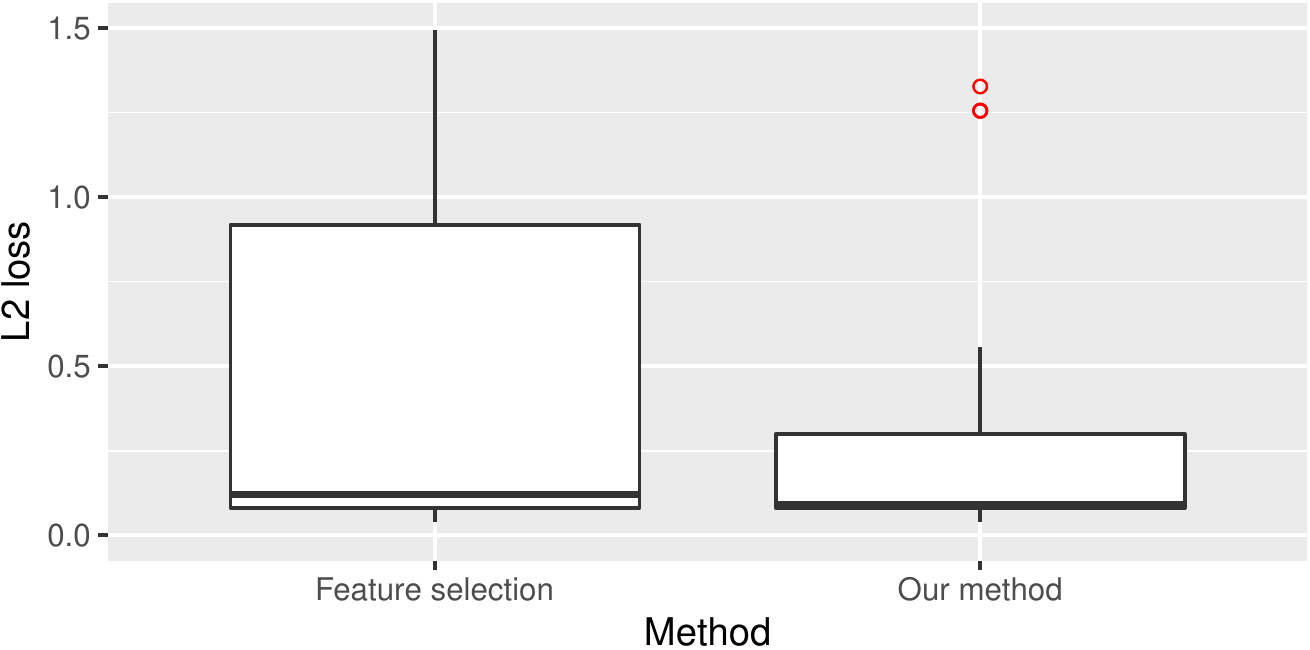}%
    \caption{Synthetic data with a very large perturbation ($\gamma=100$) and $N=1000$ samples.\label{fig:synthresults_large}}
  \end{subfigure}\qquad
  \caption{Additional results when varying the causal effect of all interventions ($\gamma$).\label{fig:extra_results_ifactor}}
\end{figure}

\begin{figure}[t]
\centering
  \begin{subfigure}[t]{0.45\textwidth}%
    \centering%
    \includegraphics[width=\textwidth]{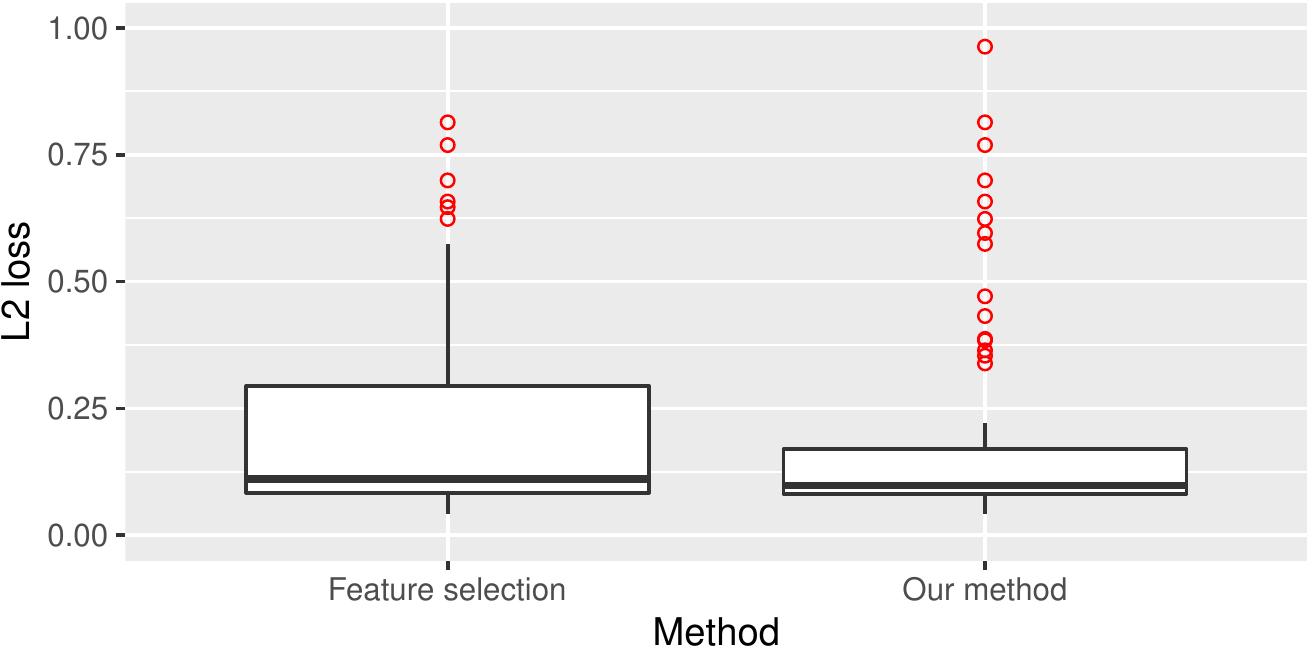}%
    \caption{Synthetic data with $N=100$ samples per regime and a large perturbation ($\gamma=10$).\label{fig:synthresults_100} }
  \end{subfigure}\qquad
  \begin{subfigure}[t]{0.45\textwidth}%
    \centering%
    \includegraphics[width=\textwidth]{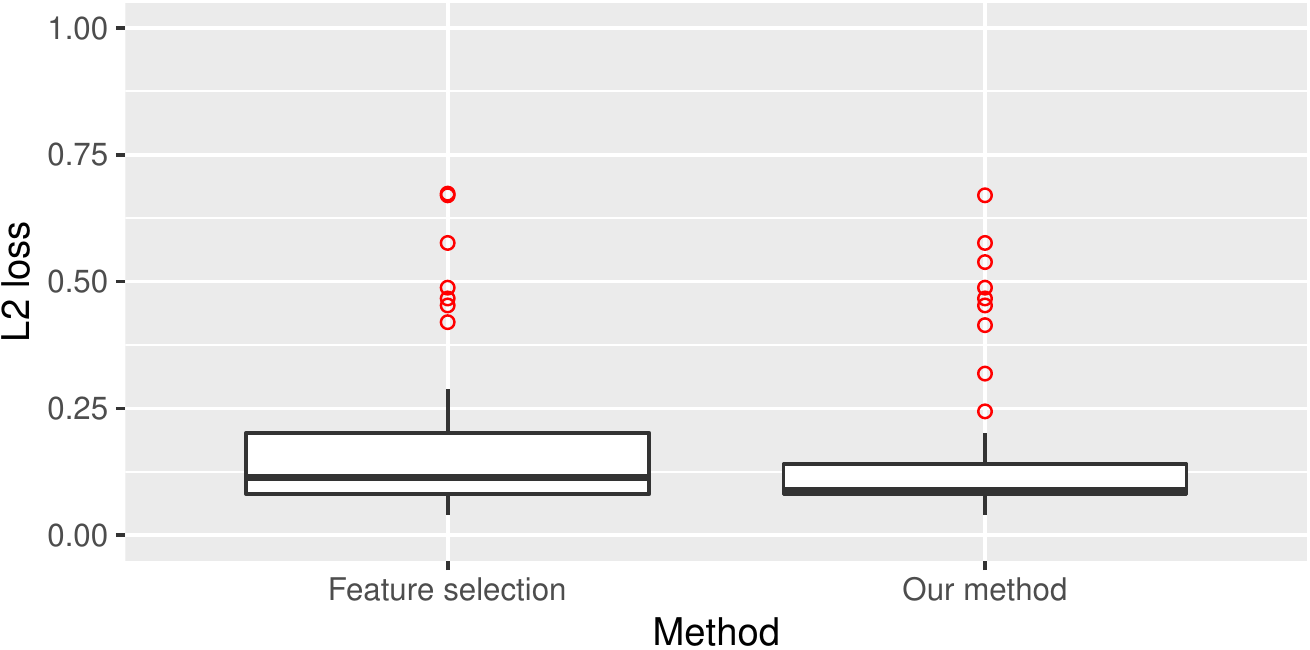}%
    \caption{Synthetic data with $N=1000$ samples per regime and a large perturbation ($\gamma=10$).\label{fig:synthresults_1000} }
  \end{subfigure}\\
  \centering
      \begin{subfigure}[t]{0.45\textwidth}%
    \centering%
    \includegraphics[width=\textwidth]{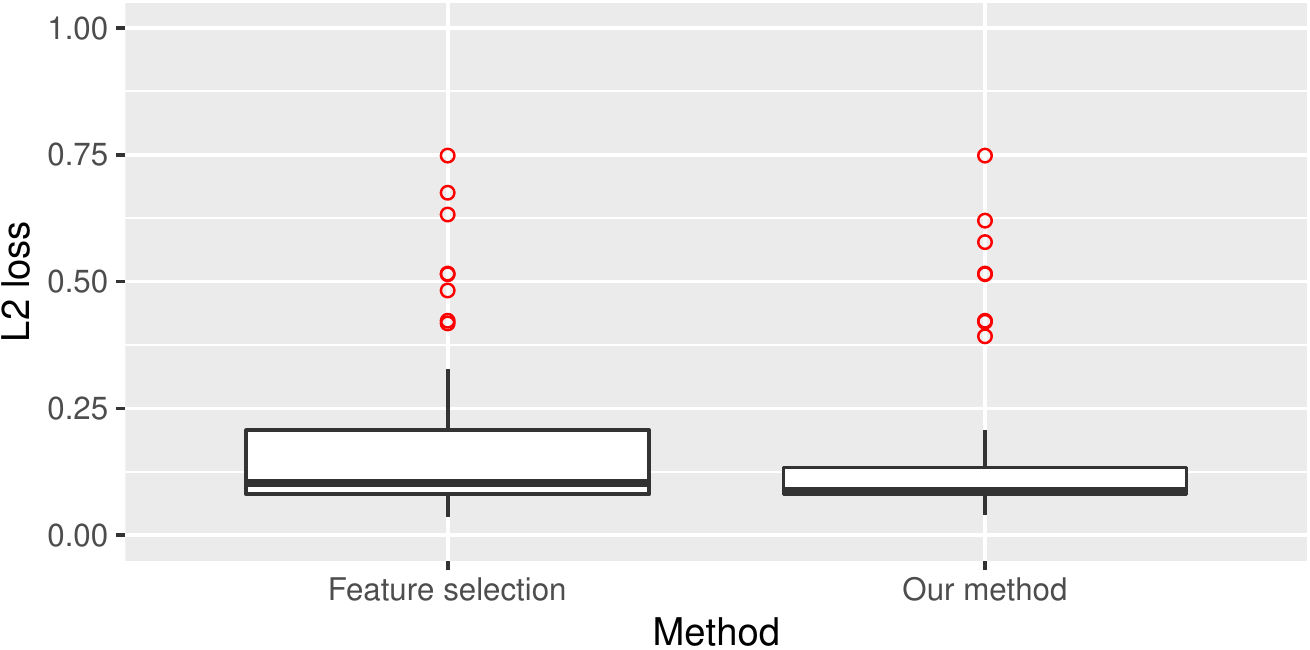}%
    \caption{Synthetic data with $N=5000$ samples per regime and a large perturbation ($\gamma=10$).\label{fig:synthresults_5000} }
    \end{subfigure}\qquad
  \caption{Additional results when varying the sample size per regime ($N$).\label{fig:extra_results_samples}}
\end{figure}


\end{document}